\tikzset{
    -Latex,auto,node distance =1 cm and 1 cm,semithick,
    state/.style ={ellipse, draw, minimum width = 0.7 cm},
    point/.style = {circle, draw, inner sep=0.04cm,fill,node contents={}},
    bidirected/.style={Latex-Latex,dashed},
    el/.style = {inner sep=2pt, align=left, sloped}
}
\theoremstyle{plain}
\newtheorem{theorem}{Theorem}[section]
\theoremstyle{definition}
\theoremstyle{remark}
\icmltitlerunning{Doubly Robust Monte Carlo Tree Search}
\begin{document}

\twocolumn[
\icmltitle{Doubly Robust Monte Carlo Tree Search}



\icmlsetsymbol{equal}{*}

\begin{icmlauthorlist}
\icmlauthor{Manqing Liu}{yyy,sch}
\icmlauthor{Andrew L. Beam}{yyy,comp}
\end{icmlauthorlist}

\icmlaffiliation{yyy}{Department of Epidemiology, CAUSALab, Harvard University, Boston, USA}
\icmlaffiliation{sch}{School of Engineering and Applied Sciences, Harvard University, Cambridge, USA}
\icmlaffiliation{comp}{Lila Sciences, Cambridge, USA}

\icmlcorrespondingauthor{Andrew Beam}{andrew\_beam@hms.harvard.edu}
\icmlkeywords{Monte Carlo Tree Search; Doubly Robust Estimation; Reinforcement Learning}

\vskip 0.3in
]



\printAffiliationsAndNotice{}  

\begin{abstract}
We present Doubly Robust Monte Carlo Tree Search (DR-MCTS), a novel algorithm that integrates Doubly Robust (DR) off-policy estimation into Monte Carlo Tree Search (MCTS) to enhance sample efficiency and decision quality in complex environments. Our approach introduces a hybrid estimator that combines MCTS rollouts with DR estimation, offering theoretical guarantees of unbiasedness and variance reduction under specified conditions. Empirical evaluations in Tic-Tac-Toe and the partially observable VirtualHome environment demonstrate DR-MCTS's superior performance over standard MCTS. In Tic-Tac-Toe, DR-MCTS achieves an 88\% win rate compared to a 10\% win rate for standard MCTS. In compound VirtualHome tasks, DR-MCTS attains a 20.7\% success rate versus 10.3\% for standard MCTS. Our scaling analysis reveals that DR-MCTS exhibits better sample efficiency, notably outperforming standard MCTS with larger language models while using a smaller model. These results underscore DR-MCTS's potential for efficient decision-making in complex, real-world scenarios where sample efficiency is paramount.
\end{abstract}
\section{Introduction}

Decision-making in complex, partially observable environments remains a fundamental challenge in artificial intelligence. Monte Carlo Tree Search (MCTS) has emerged as a powerful approach for addressing this challenge, demonstrating remarkable success in domains ranging from game playing to robotics \cite{browne2012survey}. Recently, MCTS has found applications in enhancing the reasoning capabilities of Large Language Models (LLMs) \cite{yao2023tree, zhou24r}, enabling more structured and coherent text generation. However, as environments become increasingly complex and the cost of sampling increases, particularly in the context of LLMs where each node expansion may involve expensive model queries \cite{yao2023tree}, there is a growing need for more sample-efficient methods that can make better decisions with fewer simulations.

To address this challenge, we introduce DR-MCTS, a novel algorithm that integrates Doubly Robust (DR) off-policy estimation into the MCTS framework. By leveraging the strengths of both MCTS and DR estimation, our approach aims to improve sample efficiency and decision quality in complex environments. DR-MCTS represents a significant advancement in planning algorithms, particularly for scenarios where sample efficiency is crucial, such as in applications involving large-scale language models \cite{patterson2021carbon}.

Our work makes the following key contributions:

\begin{enumerate}
    \item We introduce DR-MCTS, a novel algorithm that incorporates Doubly Robust off-policy estimation into the MCTS framework.
    \item We formulate a hybrid estimator that combines traditional MCTS rollouts with DR estimation, and provide theoretical guarantees for unbiasedness and demonstrate variance reduction properties.
    \item We conduct extensive empirical evaluations in both the fully observable domain of Tic-Tac-Toe and the partially observable VirtualHome environment, demonstrating the superior performance of DR-MCTS compared to standard MCTS and MCTS with Importance Sampling.
    \item We perform a scaling analysis to show that DR-MCTS achieves better sample efficiency, particularly in scenarios with large language models.
\end{enumerate}

\subsection{Related Work}

Our work builds upon and extends research in several areas, including Monte Carlo Tree Search and off-policy evaluation in reinforcement learning.

Monte Carlo Tree Search (MCTS) \cite{coulom2006efficient} has become a cornerstone in reinforcement learning, leading to breakthroughs in AI for games and beyond. Notable implementations include AlphaGo \cite{silver2016mastering}, AlphaZero \cite{silver2018general}, and MuZero \cite{schrittwieser2020mastering}, which have demonstrated MCTS's effectiveness in complex decision-making tasks.

Recent work has focused on enhancing MCTS's sample efficiency. Borges and Oliveira \cite{borges2021combiningonpolicytrainingmodelbased} proposed combining off-policy and on-policy training in MCTS by deriving off-policy targets from the search tree. Their approach improved data utilization within the tree and showed promising results across various environments.

The intersection of MCTS and off-policy evaluation presents a promising avenue for improvement. MCTS inherently generates off-policy data through its exploration process. These sources of off-policy data within MCTS present opportunities for more efficient learning when coupled with appropriate off-policy evaluation techniques. This connection motivates the integration of advanced off-policy estimators into the MCTS framework to enhance its efficiency and accuracy.

Doubly robust (DR) estimation, originating from causal inference and biostatistics \cite{robins1995semiparametric}, has become a powerful tool in reinforcement learning (RL) for off-policy evaluation. While importance sampling \cite{precup2000eligibility} provided a foundational approach, it often suffers from high variance. DR estimation addresses this limitation by combining importance sampling with direct method estimation.

Significant advancements in DR estimation have expanded its applicability in reinforcement learning. Dudik et al. \cite{dudik2011doubly} introduced DR to contextual bandits, while Jiang and Li \cite{jiang2016doubly} extended it to full RL domains. Further refinements include weighted DR estimators for improved sample efficiency \cite{thomas2016data}, the More Robust Doubly Robust (MRDR) estimator for variance minimization \cite{farajtabar2018more}, and double reinforcement learning for efficient off-policy evaluation in both state and action spaces \cite{kallus2020double}.

Our DR-MCTS method distinguishes itself from previous works by directly incorporating the Doubly Robust estimator into the MCTS framework. Unlike approaches that use basic off-policy statistics gathered by MCTS \cite{borges2021combiningonpolicytrainingmodelbased} or apply advanced off-policy estimators only in standard RL settings \cite{jiang2016doubly, thomas2016data, farajtabar2018more, kallus2020double}, DR-MCTS leverages the statistical power of DR estimation within the tree search process. This novel integration aims to enhance sample efficiency and decision quality in complex, partially observable environments, with particular emphasis on scenarios involving large language models.

The rest of this paper is organized as follows: Section \ref{methods} covers the background on MCTS, importance sampling, and doubly robust estimation, and introduces our DR-MCTS algorithm with its theoretical properties. Section \ref{experiments} describes our experimental setup for Tic-Tac-Toe and VirtualHome environments and presents the results. Section \ref{discussion} analyzes our findings, including DR-MCTS's performance with smaller language models, and discusses limitations and future work. Section \ref{conclusion} summarizes our contributions.

\section{Methods}
\label{methods}
\subsection{Preliminaries}

This section provides an overview of the key concepts underlying our work: Monte Carlo Tree Search (MCTS), Importance Sampling (IS), and Doubly Robust (DR) estimation.

\subsubsection{Monte Carlo Tree Search}

Monte Carlo Tree Search (MCTS) is a heuristic search algorithm for decision processes, particularly effective in domains with large state spaces \cite{browne2012survey}. MCTS builds and updates a search tree iteratively, with each iteration consisting of four steps: selection, expansion, simulation, and backpropagation.

In the selection step, the algorithm traverses the tree from the root to a leaf node using a tree policy. While traditional MCTS often uses the Upper Confidence Bounds for Trees (UCT) algorithm, in our implementation, we employ the Polynomial Upper Confidence Trees (PUCT) algorithm \cite{silver2017mastering}, which is defined as:

\begin{equation}
\label{eqn:PUCT}
    a^* = \text{argmax}_a \left(Q(h,a) + c \pi_b(a|h) \frac{\sqrt{N(h)}}{1 + N(h,a)}\right)
\end{equation}

where $Q(h,a)$ is the estimated value of action $a$ given history $h$, $\pi_b(a|h)$ is the behavior policy, $N(h)$ is the number of visits to the node with history $h$, $N(h,a)$ is the number of times action $a$ was selected in the node with history $h$, and $c$ is an exploration constant. In environments with large action spaces, PUCT's use of the behavior policy helps focus the search on promising actions more quickly than UCT's purely visit-count-based exploration. This often results in improved sample efficiency, allowing PUCT to find good solutions with fewer simulations than UCT, which is particularly beneficial in computationally expensive environments.

The expansion step adds one or more child nodes to the selected leaf node. In the simulation step, a rollout is performed from the new node(s) to estimate the value of the state. The value estimate $V_{MCTS}(h)$ for a given history h is typically calculated as the average of the rewards obtained from all simulations that pass through the node with history $h$:

\begin{equation}
\label{eqn:V_MCTS}
    V_\text{MCTS}(h) = \frac{1}{N(h)} \sum_{i=1}^{N(h)} R_i(h)
\end{equation}

where $R_i(h)$ is the cumulative reward of the i-th simulation that passes through the node with history h, and $N(h)$ is the total number of such simulations.

Finally, the backpropagation step updates the statistics of the nodes in the path from the expanded node to the root, including the visit counts $N(h)$ and $N(h,a)$, and the value estimates $Q(h,a)$.

\subsection{Importance Sampling}

Importance Sampling (IS) is a technique used in off-policy evaluation to estimate the expected return of a target policy using data collected from a different behavior policy \cite{precup2000eligibility}. We present two forms of the IS estimator: the basic trajectory-wise IS estimator and an improved step-wise version \cite{jiang2016doubly}.

Let $\rho_t := \frac{\pi_e(a_t|h_t)}{\pi_b(a_t|h_t)}$ be the per-step importance ratio, where $\pi_e$ is the target (evaluation) policy and $\pi_b$ is the behavior policy. Define the cumulative importance ratio as $\rho_{1:t} := \prod_{k=1}^t \rho_k$. Then, for a trajectory $(h_0, a_0, r_0, \ldots, h_H)$, the estimators are defined as follows:

\begin{equation}
    V_\text{IS}(h) := \rho_{1:H} \cdot \left(\sum_{t=0}^{H-1} \gamma^t r_t\right)
\end{equation}

\begin{equation}
\label{eqn:step-IS}
    V_\text{step-IS}(h) := \sum_{t=0}^{H-1} \gamma^t \rho_{1:t} r_t
\end{equation}

The step-wise IS estimator ($V_\text{step-IS}$) often achieves lower variance than the basic IS estimator ($V_\text{IS}$), especially for longer trajectories. This is because it allows for partial usage of a trajectory: the cumulative importance ratio $\rho_{1:t}$ only accounts for the portion of the trajectory up to time $t$.

\subsection{Doubly Robust Estimation}

Doubly Robust (DR) estimation combines direct method (DM) estimation with importance sampling to provide an estimator that is unbiased and potentially has lower variance than pure IS \cite{jiang2016doubly}. The DR estimator is defined as:

\begin{equation}
\label{eqn:DR}
    V_\text{DR}(h) = \hat{V}(h) + \sum_{t=0}^{H-1} \gamma^t \rho_{1:t} (r_t + \gamma \hat{V}(h_{t+1}) - \hat{Q}(h_t, a_t))
\end{equation}

where $\rho_{1:t} = \prod_{k=1}^t \frac{\pi_e(a_k|h_k)}{\pi_b(a_k|h_k)}$ is the cumulative importance ratio, $\hat{V}(h)$ is an estimate of the value function, and $\hat{Q}(h_t, a_t)$ is an estimate of the action-value function.

The DR estimator has the desirable property of being unbiased if either the importance sampling weights are correct or if the value function estimates are accurate. This "double" protection against misspecification is what gives the estimator its name and makes it particularly useful in practical applications where perfect models are rare \cite{thomas2016data}.

A key aspect of DR estimation, as established in the seminal work of \citet{jiang2016doubly}, is the use of cross-validation for estimating $\hat{Q}(h_t, a_t)$. It helps reduce bias by mitigating overfitting, as using the same data to estimate $\hat{Q}$ and evaluate the DR estimator can lead to biased estimates \cite{Chernozhukov2018}. In reinforcement learning settings where data can be scarce and expensive to obtain, cross-validation allows for more efficient use of the available data by reusing it for both model fitting and evaluation \cite{arlot2010survey}.

\subsection{Doubly Robust MCTS}

We propose a hybrid estimator that combines the standard MCTS rollout estimate with the Doubly Robust (DR) estimate, leveraging the strengths of both MCTS and off-policy evaluation techniques. Our hybrid estimator is defined as:

\begin{equation}
\label{eqn:hybrid}
    V_{\text{hybrid}}(h) = \beta V_{\text{MCTS}}(h) + (1-\beta) V_{DR}(h)
\end{equation}

where $\beta \in [0,1]$ is a weighting parameter, $V_{\text{MCTS}}(h)$ is the standard MCTS rollout estimate, and $V_{DR}(h)$ is the DR estimate.

A crucial aspect of our DR-MCTS algorithm is the estimation of target and behavior policies. The target policy $\pi_e$ is computed using a softmax function over the Q-values of the children of a given state node:

\begin{equation}
\label{eqn:target_policy}
    \pi_e(a|h) = \frac{\exp(Q(h,a)/\tau)}{\sum_{a'} \exp(Q(h,a')/\tau)}
\end{equation}

where $Q(h,a)$ is the Q-value of action $a$ in history $h$, and $\tau$ is a temperature parameter controlling the exploration-exploitation trade-off. The behavior policy $\pi_b$ varies depending on the specific environment, as detailed in Appendix \ref{appendix:behavior_policy}.

In our DR-MCTS implementation, we estimate $\hat{V}(h)$ using a weighted average of rewards from child nodes, where weights are derived from the target policy:

\begin{equation}
    \hat{V}(h) = \sum_{a} \pi_e(a|h) \cdot \frac{1}{N(h,a)} \sum_{i=1}^{N(h,a)} R_i(h,a)
\end{equation}

where $N(h,a)$ is the number of times action $a$ was taken in history $h$, and $R_i(h,a)$ is the i-th observed reward for taking action $a$ in history $h$.

We estimate $\hat{Q}(h_t, a_t)$ using k-fold cross-validation on the rewards collected for each action:

\begin{equation}
\label{eqn:q_hat}
    \hat{Q}(h_t, a_t) = \frac{1}{K} \sum_{k=1}^K \frac{1}{|D_k|} \sum_{i \in D_k} R_i(h_t, a_t)
\end{equation}

where $K$ is the number of folds, and $D_k$ is the set of indices for the k-th fold. When insufficient data is available for k-fold cross-validation, we use the mean of all available rewards for the action. This approach helps to reduce overfitting and provides more robust estimates, which is particularly important in MCTS where the number of visits to each state-action pair can vary significantly.

The weighting parameter $\beta$ in our hybrid estimator plays a crucial role in balancing the contributions of the MCTS rollout estimate and the DR estimate. We explored a range of $\beta$ values from $[0, 0.25, 0.35, 0.5]$, and chose the one with best performance. 

To establish the theoretical foundations of our hybrid estimator (Equation \ref{eqn:hybrid}), we present two key theorems. These theorems demonstrate that our proposed hybrid approach preserves the desirable properties of the Doubly Robust (DR) estimator (Equation \ref{eqn:DR}) while potentially offering additional benefits. Specifically, we prove the unbiasedness of the hybrid estimator and establish conditions under which it achieves variance reduction compared to standard MCTS.

\begin{theorem}[Unbiasedness of Hybrid Estimator]
\label{theorem1}
The hybrid estimator is unbiased for estimating the value of the target policy $\pi_e$.
\end{theorem}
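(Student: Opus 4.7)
The plan is to exploit linearity of expectation to split the hybrid estimator into its two components, prove that each of $V_{\text{MCTS}}(h)$ and $V_{DR}(h)$ is unbiased for $V^{\pi_e}(h)$, and then observe that any convex combination of unbiased estimators of the same quantity is itself unbiased (the $\beta$ and $1-\beta$ weights sum to one). So the theorem reduces to the two individual unbiasedness claims.

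For the DR component, I would follow the standard Jiang and Li (2016) argument. Taking expectation of Equation~\ref{eqn:DR} over trajectories drawn from $\pi_b$, apply the per-step importance-sampling identity $\mathbb{E}_{\pi_b}[\rho_{1:t}\, f(h_{0:t},a_{0:t},r_{0:t})] = \mathbb{E}_{\pi_e}[f(h_{0:t},a_{0:t},r_{0:t})]$ to each term in the sum. This converts every weighted residual into an expectation under $\pi_e$, so that $\mathbb{E}[V_{DR}(h)] = \hat V(h) + \mathbb{E}_{\pi_e}\!\left[\sum_t \gamma^t r_t\right] + \mathbb{E}_{\pi_e}\!\left[\sum_t \gamma^t(\gamma \hat V(h_{t+1}) - \hat Q(h_t,a_t))\right]$. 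The first expectation on the right is exactly $V^{\pi_e}(h)$. For the control-variate sum, the key structural fact is that the paper's definition of $\hat V$ satisfies $\hat V(h) = \mathbb{E}_{a\sim \pi_e(\cdot|h)}[\hat Q(h,a)]$ (this is immediate from the formula above Equation~\ref{eqn:q_hat}), so $\mathbb{E}_{\pi_e}[\hat Q(h_t,a_t)\mid h_t] = \hat V(h_t)$. Substituting and re-indexing lets the control-variate sum telescope to $-\hat V(h)$, which cancels the leading $\hat V(h)$ term, yielding $\mathbb{E}[V_{DR}(h)] = V^{\pi_e}(h)$. Cross-validation in Equation~\ref{eqn:q_hat} is needed here to guarantee that $\hat Q$ is independent of the evaluation trajectory so that the IS identity applies cleanly.

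For the MCTS component, I would appeal to the asymptotic consistency of MCTS rollouts under the target policy: each $R_i(h)$ averaged in Equation~\ref{eqn:V_MCTS} is a Monte Carlo draw of the return whose distribution, in the limit of expanded tree policy, matches $\pi_e$ since $\pi_e$ is derived from the backed-up $Q$-values at $h$'s descendants. Equivalently, one can treat this as an assumption inherited from the standard MCTS literature that $\mathbb{E}[V_{\text{MCTS}}(h)] = V^{\pi_e}(h)$ under the PUCT selection rule \cite{silver2017mastering}.

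The main obstacle is exactly this last step: justifying unbiasedness (as opposed to mere consistency) of the MCTS rollout estimator for $V^{\pi_e}$ when rollouts are generated by a PUCT tree policy driven by $\pi_b$, not $\pi_e$. In finite samples $V_{\text{MCTS}}$ is typically biased toward $V^{\pi_b}$, so the cleanest route is to state an explicit assumption that the rollout policy at $h$ coincides in distribution with $\pi_e$ (or to interpret the theorem in the large-rollout limit where PUCT concentrates on the greedy-with-exploration policy that defines $\pi_e$). Once that assumption is in place, linearity of expectation closes the argument.
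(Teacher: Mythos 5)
Your proposal follows the same skeleton as the paper's proof: decompose $V_{\text{hybrid}}$ by linearity, argue each of $V_{\text{MCTS}}$ and $V_{DR}$ is unbiased for $V^{\pi_e}(h)$, and conclude from the fact that the weights $\beta$ and $1-\beta$ sum to one. Where you differ is in how the DR component is handled, and your route is the more careful one. The paper pushes the expectation through the importance weights to obtain $\hat V(h) + \sum_t \gamma^t\bigl(Q(h_t,a_t) - \mathbb{E}_{\pi_e}[\hat Q(h_t,a_t)]\bigr)$ and then closes the argument by invoking the condition that $\hat Q$ is an unbiased estimator of $Q$ (which, taken literally, would leave $\hat V(h)$ rather than $V(h)$ as the result). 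You instead use the standard Jiang--Li telescoping identity, exploiting that $\hat V(h) = \mathbb{E}_{a\sim\pi_e(\cdot|h)}[\hat Q(h,a)]$ by the paper's own construction of $\hat V$, so the control-variate terms cancel the leading $\hat V(h)$ without requiring $\hat Q$ to be accurate at all --- only independent of the evaluation trajectory, which is exactly what the cross-validation in Equation~\ref{eqn:q_hat} is meant to provide. That is the argument that actually delivers the ``doubly robust'' property. You also explicitly flag the weakest link, which the paper asserts without justification: $V_{\text{MCTS}}(h)$ averages returns generated under the PUCT tree policy driven by $\pi_b$, so in finite samples it is not an unbiased estimator of $V^{\pi_e}(h)$, and some explicit assumption (rollouts distributed as $\pi_e$, or a large-rollout limit) is needed to make the theorem literally true. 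Your version is therefore both consistent with the paper's intent and more honest about the hypotheses under which the statement holds.
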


\textit{Proof.} See Appendix \ref{appendiexA1} for the detailed proof.

\textbf{Implication:} Theorem \ref{theorem1} ensures that our hybrid estimator maintains the crucial property of unbiasedness, guaranteeing that the value estimates produced by our method will, on average, correctly represent the true values of states and actions.

\begin{theorem}[Variance Reduction Condition for Hybrid Estimator]
\label{theorem2}
Let $V_{\text{hybrid}}(h)$ be the hybrid estimator as defined in Equation \ref{eqn:hybrid}, and $V_{\text{MCTS}}(h)$ be the standard MCTS estimator. The hybrid estimator has lower variance than the standard MCTS estimator when:

\begin{equation}
\mathbb{E}\left[\sum_{t=0}^{H-1} \gamma^{2t} \rho_{1:t}^2 (Q(h_t, a_t) - \hat{Q}(h_t, a_t))^2\right] = o\left(\frac{\text{Var}(V_{\text{MCTS}}(h))}{(1-\beta)^2}\right)
\end{equation}

where $\beta \in (0,1)$ is the weighting parameter in the hybrid estimator, $\gamma$ is the discount factor, $\rho_{1:t}$ is the cumulative importance ratio, and $Q(h_t, a_t)$ and $\hat{Q}(h_t, a_t)$ are the true and estimated action-value functions, respectively.
\end{theorem}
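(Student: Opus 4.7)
The plan is to start from the variance of a linear combination and reduce the problem to controlling the variance of the DR component alone. Writing
\[
\mathrm{Var}(V_{\text{hybrid}}(h)) = \beta^{2}\mathrm{Var}(V_{\text{MCTS}}(h)) + (1-\beta)^{2}\mathrm{Var}(V_{DR}(h)) + 2\beta(1-\beta)\,\mathrm{Cov}(V_{\text{MCTS}}(h),V_{DR}(h)),
\]
the task becomes showing that the last two terms together contribute $o(\mathrm{Var}(V_{\text{MCTS}}(h)))$; then, since $\beta\in(0,1)$, the dominant term $\beta^{2}\mathrm{Var}(V_{\text{MCTS}}(h))$ is strictly smaller than $\mathrm{Var}(V_{\text{MCTS}}(h))$, giving the desired inequality asymptotically.

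The second step is to bound $\mathrm{Var}(V_{DR}(h))$ by the quantity that appears in the hypothesis. I would invoke the standard decomposition from \citet{jiang2016doubly}: using Theorem \ref{theorem1} (so the DR control variate has mean zero in each summand under the target policy, once the correct conditional expectations are formed), the law of total variance applied step by step along the trajectory collapses cross-terms, yielding
\[
\mathrm{Var}(V_{DR}(h)) \;\le\; C\,\mathbb{E}\!\left[\sum_{t=0}^{H-1}\gamma^{2t}\rho_{1:t}^{2}\bigl(Q(h_{t},a_{t})-\hat{Q}(h_{t},a_{t})\bigr)^{2}\right]
\]
for some constant $C$ depending only on $H$ and the reward range. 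The cross-validated construction of $\hat{Q}$ in Equation \ref{eqn:q_hat} is what lets us treat $\hat{Q}$ as independent of the evaluation trajectory so that these cross-terms actually vanish in expectation; this is the standard role of sample splitting in DR estimation \cite{Chernozhukov2018}.

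Combining the bound with the hypothesis gives $(1-\beta)^{2}\mathrm{Var}(V_{DR}(h)) = o(\mathrm{Var}(V_{\text{MCTS}}(h)))$ directly. For the covariance term I would apply Cauchy--Schwarz:
\[
\bigl|\mathrm{Cov}(V_{\text{MCTS}}(h),V_{DR}(h))\bigr| \le \sqrt{\mathrm{Var}(V_{\text{MCTS}}(h))\cdot\mathrm{Var}(V_{DR}(h))},
\]
so $2\beta(1-\beta)|\mathrm{Cov}| \le 2\beta\sqrt{\mathrm{Var}(V_{\text{MCTS}}(h))}\cdot (1-\beta)\sqrt{\mathrm{Var}(V_{DR}(h))}$, which is $\sqrt{\mathrm{Var}(V_{\text{MCTS}}(h))}\cdot o(\sqrt{\mathrm{Var}(V_{\text{MCTS}}(h))}) = o(\mathrm{Var}(V_{\text{MCTS}}(h)))$ by the same hypothesis. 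Substituting back into the variance decomposition yields $\mathrm{Var}(V_{\text{hybrid}}(h)) \le \beta^{2}\mathrm{Var}(V_{\text{MCTS}}(h)) + o(\mathrm{Var}(V_{\text{MCTS}}(h))) < \mathrm{Var}(V_{\text{MCTS}}(h))$ for sufficiently accurate $\hat{Q}$, as claimed.

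The main obstacle I anticipate is the Jiang--Li-type variance bound in the second step: producing a clean inequality requires either an independence assumption between $\hat{Q}$ and the evaluation rollout (supplied here by $k$-fold cross-validation) or a careful conditional argument that shows the DR control-variate sum is a martingale-difference-like object whose squared terms dominate the variance. The remaining pieces---the variance-of-a-sum identity, Cauchy--Schwarz, and collecting little-$o$ terms---are routine once that bound is in hand.
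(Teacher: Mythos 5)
Your decomposition is genuinely different from the paper's, and in one respect it is structurally sounder. The paper reparametrizes via $\Delta(h) = V_{DR}(h) - V_{\text{MCTS}}(h)$, writes $\text{Var}(V_{\text{hybrid}}) = \text{Var}(V_{\text{MCTS}}) + (1-\beta)^2\text{Var}(\Delta) + 2\beta(1-\beta)\text{Cov}(V_{\text{MCTS}},\Delta)$ (the cross-term coefficient should actually be $2(1-\beta)$ under that reparametrization), and is then forced to demand $(1-\beta)^2\epsilon + 2\beta(1-\beta)\sqrt{\text{Var}(V_{\text{MCTS}})\,\epsilon} < 0$ --- a sum of nonnegative quantities --- which it only "solves" via a sign slip. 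You instead keep the coefficient $\beta^2 < 1$ on $\text{Var}(V_{\text{MCTS}})$ and use the slack $(1-\beta^2)\text{Var}(V_{\text{MCTS}})$ to absorb the DR-variance and covariance contributions; together with Cauchy--Schwarz that part of your argument genuinely closes, which the paper's final inequality does not.

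The gap is exactly where you flagged it. The bound $\text{Var}(V_{DR}(h)) \le C\,\mathbb{E}\bigl[\sum_{t}\gamma^{2t}\rho_{1:t}^{2}(Q(h_t,a_t)-\hat{Q}(h_t,a_t))^{2}\bigr]$ is false in general: take $\hat{Q}\equiv Q$, so the right side is zero, yet $V_{DR}$ retains strictly positive variance from the stochasticity of rewards and transitions --- the Jiang--Li decomposition contains additional terms of the form $\mathbb{E}\bigl[\gamma^{2t}\rho_{1:t}^{2}\,\text{Var}(r_t+\gamma V(h_{t+1})\mid h_t,a_t)\bigr]$ that do not involve $Q-\hat{Q}$ and do not vanish. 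Cross-fitting removes the bias from reusing data to fit $\hat{Q}$; it cannot remove this intrinsic variance. So the theorem's hypothesis, which controls only the $(Q-\hat{Q})^2$ term, does not by itself yield $(1-\beta)^2\text{Var}(V_{DR}) = o(\text{Var}(V_{\text{MCTS}}))$; you would need the further assumption that the intrinsic, importance-weighted DR variance is also $o\bigl(\text{Var}(V_{\text{MCTS}})/(1-\beta)^2\bigr)$. You should be aware that the paper's own proof has the same hole: it bounds $\text{Var}(\Delta)$ by $\mathbb{E}\bigl[\sum_t\gamma^{2t}\rho_{1:t}^2(r_t+\gamma\hat{V}(h_{t+1})-\hat{Q}(h_t,a_t))^2\bigr]$ and then silently substitutes $(Q-\hat{Q})^2$ for that integrand when defining $\epsilon$, suppressing the same intrinsic-variance terms. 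Your proposal is therefore no weaker than the published argument, but neither constitutes a complete proof of the stated theorem.
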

\textit{Proof.} See Appendix \ref{appendixA2} for the detailed proof.

\textbf{Implication:} Theorem \ref{theorem2} provides a precise condition for variance reduction in our hybrid estimator compared to standard MCTS. The variance reduction is maximized when $\hat{Q}$ closely approximates $Q$, while still allowing for some discrepancy. However, our hybrid estimator is robust and can provide benefits even with imperfect $Q$-value estimates. As long as the $Q$-value estimation errors are sufficiently small relative to MCTS variance, our method ensures more reliable value estimates across various scenarios. This property potentially leads to faster convergence, more robust decision-making, and improved sample efficiency, even in challenging environments where perfect $Q$-value estimation is infeasible.

\begin{algorithm}[tb]
   \caption{DR-MCTS Algorithm}
   \label{alg:drmcts}
\begin{algorithmic}
   \STATE {\bfseries Input:} state $s_0$, history $h_0$, iterations $N$, estimator type
   \STATE Initialize tree $T$ with root $(s_0, h_0)$
   \FOR{$i=1$ {\bfseries to} $N$}
   \STATE $(s, h) \gets (s_0, h_0)$
   \WHILE{$(s, h)$ is not terminal {\bfseries and} $(s, h)$ is in $T$}
   \STATE $a \gets \text{argmax}_{a'} \text{PUCT}((s,h), a')$
   \STATE $s, h \gets \text{Apply}(s, a), h + a$
   \ENDWHILE
   \IF{$(s, h)$ is not terminal}
   \STATE Add $(s, h)$ to $T$
   \STATE $v_{\text{MCTS}} \gets \text{Simulate}(s, h)$
   \IF{estimator type = DR}
   \STATE $v_{\text{DR}} \gets \text{ComputeDR}(s, h, \pi_e, \pi_b, \hat{Q}, \hat{V})$
   \ENDIF
   \STATE $v \gets \beta v_{\text{MCTS}} + (1-\beta) v_{\text{DR}}$ 
   \ELSE
   \STATE $v \gets \text{Reward}(s)$
   \ENDIF
   \WHILE{$(s, h)$ is not $(s_0, h_0)$}
   \STATE Update statistics for $(s, h)$ in $T$ with $v$
   \STATE $(s, h) \gets \text{Parent}(s, h)$
   \ENDWHILE
   \ENDFOR
   \STATE {\bfseries Return:} $\text{argmax}_a Q((s_0, h_0), a)$
\end{algorithmic}
\end{algorithm}

We now introduced the algorithm we used for DR-MCTS (Algorithm \ref{alg:drmcts}). Our algorithm initializes a search tree with the root node representing the initial state and history. For a specified number of iterations, it traverses the tree using the PUCT selection strategy (Equation \ref{eqn:PUCT}), balancing exploration and exploitation. When a new node is reached, it's added to the tree, and a simulation estimates its value $V_{MCTS}$. If the DR estimator is used, $V_{DR}$ is calculated (Equation \ref{eqn:DR}). These estimates are combined using the hybrid estimator (Equation \ref{eqn:hybrid}). The resulting value is backpropagated, updating node statistics. After all iterations, the algorithm returns the action with the highest estimated value at the root node.

\section{Experiments}
\label{experiments}
\subsection{Experimental Setup}

To evaluate the performance of our DR-MCTS algorithm, we conducted experiments in two distinct environments: the classic game of Tic-Tac-Toe and the more complex VirtualHome environment. These environments were chosen to test the algorithm's effectiveness in both simple, fully observable domains and complex, partially observable scenarios.

\subsubsection{Tic-Tac-Toe}

\textbf{Tic-Tac-Toe.} To evaluate our DR-MCTS algorithm, we first conducted experiments in the classic game of Tic-Tac-Toe. Tic-Tac-Toe is played on a 3x3 grid where two players alternately place their symbols (X and O). The game concludes when one player forms a line of three symbols horizontally, vertically, or diagonally, or when all cells are filled, resulting in a draw.

\textbf{Data.} We implemented a Tic-Tac-Toe environment in Python to generate game data for our experiments. The game state is represented by a list of 9 elements, each corresponding to a cell on the board, which can be empty, `X', or `O'. Our environment tracks the current player and provides methods for making moves, checking for valid moves, determining game outcomes, and cloning the game state. The reward structure assigns 1.0 for a win, 0.5 for a draw, and 0.0 for non-terminal moves. Invalid moves, while accounted for in the implementation with a -1.0 reward, do not occur in actual gameplay due to the constraints imposed by the action selection process.

\textbf{Evaluation.} Our evaluation framework compared three algorithms: standard MCTS, MCTS with Importance Sampling (IS-MCTS), and our proposed DR-MCTS. We conducted experiments across a range of MCTS rollouts (20, 40, 60, 80, and 100) to assess performance scaling with increased computational resources. For each algorithm pair and rollout count, we simulated 100 games, alternating the roles of X and O to ensure fairness. The primary performance metric was the win rate of each algorithm against its opponent, with draws recorded separately. 

\textbf{Baselines.} We used two baseline models for comparison. The first was standard MCTS, which uses pure Monte Carlo rollouts to estimate state values. The second was IS-MCTS, an intermediate comparison point that incorporates the step-wise importance sampling estimator. That is, we replace $v_{DR}$ in Algorithm \ref{alg:drmcts} with equation \ref{eqn:step-IS}. Both baselines use the same tree structure and selection strategy as our DR-MCTS algorithm, differing only in their value estimation methods.

\subsubsection{Virtual Home}
\textbf{Virtual Home. } Virtual Home is a complex, partially observable 3D environment simulating a household setting \cite{puig2018virtualhome}. It contains multiple rooms (e.g., kitchen, living room, bedroom) with hundreds of interactive items and containers. The state space is high-dimensional and partially observable, while the action space is large, including navigation and object interaction actions. Following Zhao et al. 2023 \cite{Zhao2023}, we utilize Large Language Models (LLMs) in two crucial roles: as a world model and as a policy model. The LLM-based world model informs the planning process by providing commonsense knowledge about the environment, which is used to estimate Q-values. Additionally, an LLM-based policy model is employed to generate a heuristic policy, guiding action selection during the search process. This dual use of LLMs enables more efficient exploration of the vast state-action space in Virtual Home by leveraging the models' understanding of typical household layouts and object interactions.

\textbf{Data.} We create tasks with randomly initialized scenes and expert trajectories. We evaluate on four datasets, each testing different aspects of generalization:

1) \textit{Novel Simple}: Rearranging familiar items in new ways, e.g.,``put one plate inside the fridge'' when training only included ``put one plate on the kitchen table".

2) \textit{Novel Objects}: Tasks with unfamiliar objects, e.g.,``place the blender on the kitchen counter" when blenders were not in training data.

3) \textit{Novel Comp.}: New combinations of familiar tasks, e.g.,``put the fork on the dining table and the milk in the fridge" when training only had these actions separately.

4) \textit{Novel Comp. + Objects}: Combining new task compositions with unfamiliar objects, e.g.,``place the juicer on the kitchen counter and the kiwi in the fruit bowl," where neither item appeared in training data.

\textbf{Evaluation.} We evaluate DR-MCTS against standard MCTS and IS-MCTS in two parts: success rate comparison and inference time scaling. For success rates, we use LLMs (GPT-4o-mini and GPT-4o) as world and policy models across all four tasks. Success is defined as task completion within 10 steps. For scaling analysis, we compare two configurations: (1) GPT-4o for both world and policy models in MCTS, and (2) a hybrid approach (GPT-4o-mini as world model, GPT-4o as policy model) for all algorithms. We vary search tree depth from 5 to 20 steps, running 50 episodes per task with a fixed random seed. 

\textbf{Baselines.} We used the same baselines as Tic-Tac-Toe. Comparing DR-MCTS against standard MCTS (using pure Monte Carlo rollouts) and IS-MCTS. All algorithms use the same tree structure and PUCT selection strategy, differing only in value estimation methods. 

\subsection{Results}
\paragraph{Tic-Tac-Toe} Figure \ref{fig:1} and Table \ref{tab:tic-tac-toe} show the win rates of DR-MCTS and IS-MCTS compared to standard MCTS in Tic-Tac-Toe, across different rollout numbers. Both DR-MCTS and IS-MCTS consistently outperform MCTS, with DR-MCTS showing a more pronounced advantage. DR-MCTS achieves win rates ranging from 63\% to 88\%, while IS-MCTS wins between 57\% and 82\% of games. The performance gap widens as the number of rollouts increases, with DR-MCTS reaching an 88\% win rate at 100 rollouts compared to MCTS's 10\%. These results demonstrate the superior performance of both DR-MCTS and IS-MCTS over standard MCTS in Tic-Tac-Toe, with DR-MCTS exhibiting the strongest overall performance.
\begin{figure}[t]
\vskip 0.2in
    \centering
    \includegraphics[width=0.5\textwidth]{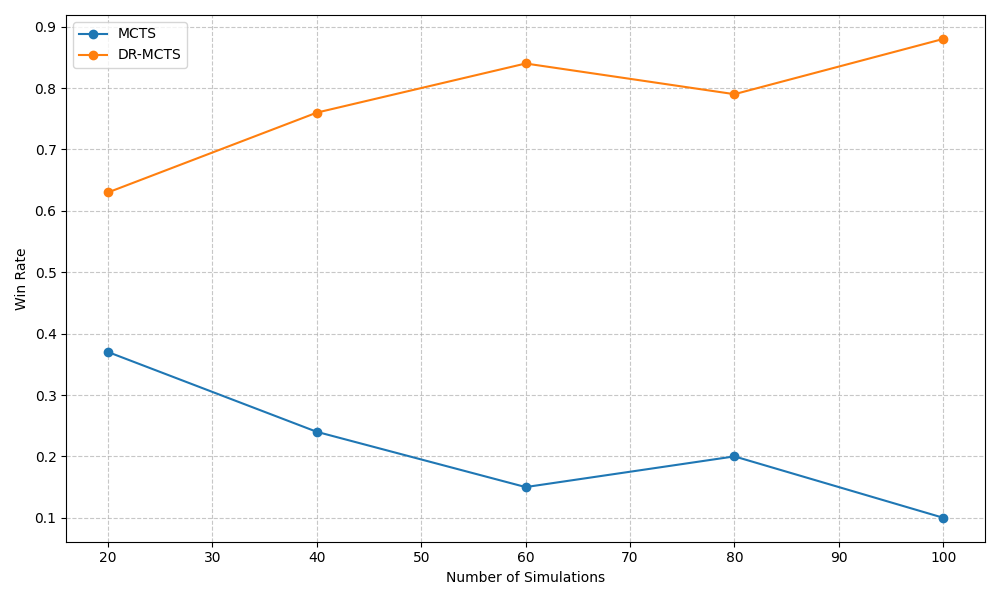}
    \label{fig:1a}
    
    \vspace{0.1cm}
    
    \includegraphics[width=0.5\textwidth]{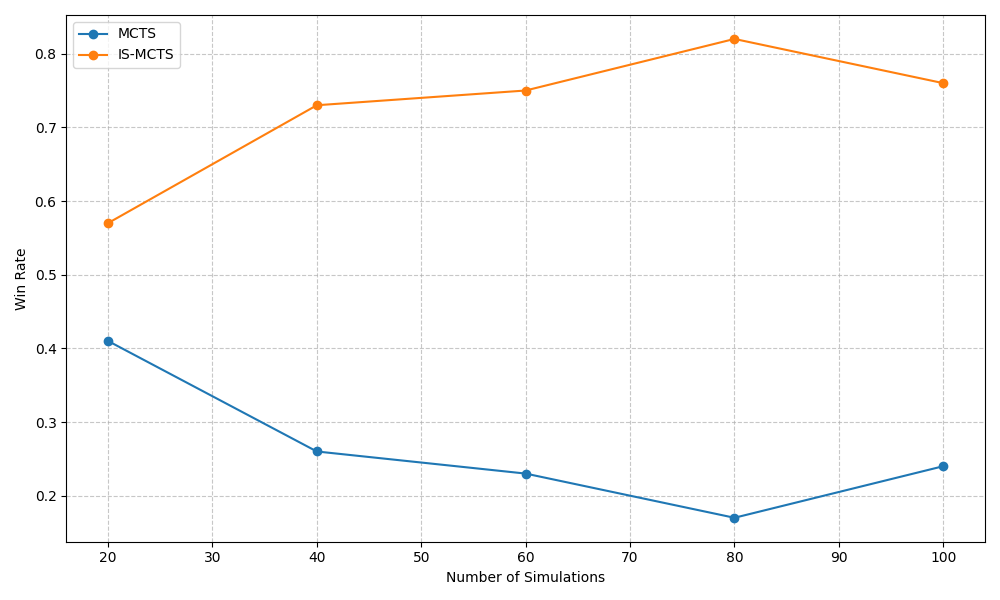}
    \label{fig:1b}
    
    \caption{Win rates for Tic-Tac-Toe across number of rollouts. Top: DR-MCTS vs. MCTS. Bottom: IS-MCTS vs. MCTS}
    \label{fig:1}
\vskip -0.2in
\end{figure}

\begin{table}[t]
\caption{Win rates comparison of MCTS vs DR-MCTS and MCTS vs IS-MCTS for Tic-Tac-Toe across different rollout numbers.}
\label{tab:tic-tac-toe}
\vskip 0.15in
\begin{center}
\begin{small}
\begin{sc}
\begin{tabular}{lccc}
\toprule
Rollouts & MCTS & DR-MCTS & IS-MCTS \\
\midrule
20  & 0.37 & \textbf{0.63} & - \\
40  & 0.24 & \textbf{0.76} & - \\
60  & 0.15 & \textbf{0.84} & - \\
80  & 0.20 & \textbf{0.79} & - \\
100 & 0.10 & \textbf{0.88} & - \\
\midrule
20  & 0.41 & - & \textbf{0.57} \\
40  & 0.26 & - & \textbf{0.73} \\
60  & 0.23 & - & \textbf{0.75} \\
80  & 0.17 & - & \textbf{0.82} \\
100 & 0.24 & - & \textbf{0.76} \\
\bottomrule
\end{tabular}
\end{sc}
\end{small}
\end{center}
\vskip -0.1in
\end{table}

\paragraph{Virtual Home} 
\begin{figure}[t]
\vskip 0.2in
    \centering
    \includegraphics[width=0.5\textwidth]{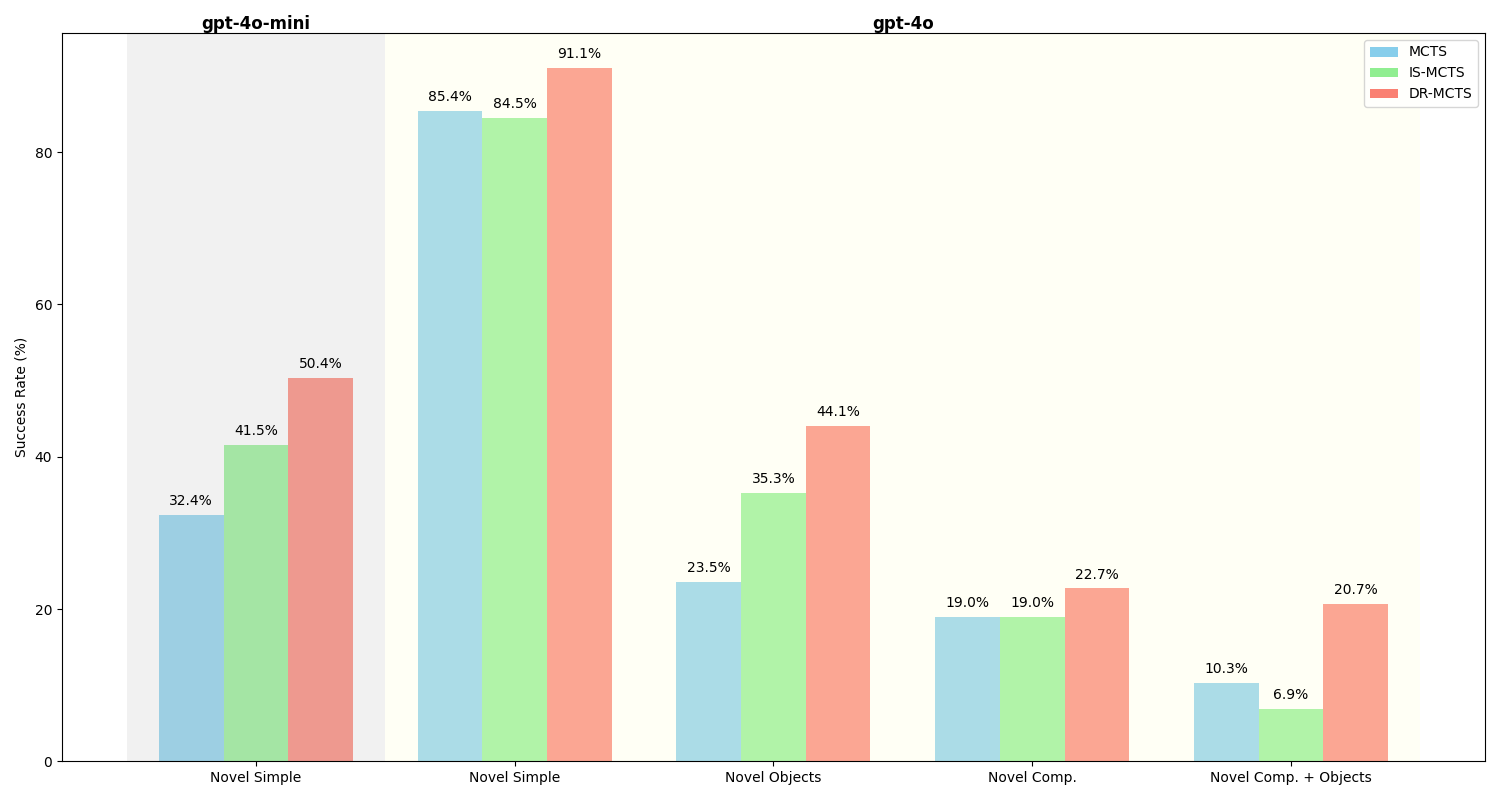}
    \caption{Success rates for Virtual Home tasks across different algorithms and model sizes}
    \label{fig:virtualhome_success}
    \vskip -0.2in
\end{figure}

\begin{table*}[t]
\caption{Performance comparison of different estimators across task types and LLM models}
\label{tab:performance_comparison}
\vskip 0.15in
\centering
\resizebox{\textwidth}{!}{%
\begin{tabular}{llllrrrrrr}
\toprule
LLM model & Estimator & Tasks & Success rate & Input Tokens (M) & Output Tokens (M) & API cost (\$) & Time \\
\midrule
gpt-4o-mini & MCTS & Novel Simple & 32.40\% & 29.24 & 0.823 & 4.88 & 15:43:51 \\
gpt-4o-mini & IS-MCTS & Novel Simple & 41.50\% & 28.38 & 0.746 & 4.71 & 14:28:27 \\
gpt-4o-mini & DR-MCTS & Novel Simple & \textbf{50.40\%} & \textbf{27.31} & \textbf{0.648} & \textbf{4.49} & \textbf{13:41:15} \\
\midrule
gpt-4o & MCTS & Novel Simple & 85.40\% & 16.402 & 0.268 & 44.69 & 8:43:53 \\
gpt-4o & IS-MCTS & Novel Simple & 84.50\% & 16.323 & \textbf{0.262} & 43.83 & 7:33:40 \\
gpt-4o & DR-MCTS & Novel Simple & \textbf{91.10\%} & \textbf{16.053} & 0.263 & \textbf{43.13} & \textbf{7:24:42} \\
\midrule
gpt-4o & MCTS & Novel Objects & 23.50\% & 8.299 & 0.126 & 22.00 & 4:32:02 \\
gpt-4o & IS-MCTS & Novel Objects & 35.30\% & 8.565 & 0.133 & 22.74 & 3:57:00 \\
gpt-4o & DR-MCTS & Novel Objects & \textbf{44.10\%} & \textbf{8.03} & \textbf{0.117} & \textbf{21.24} & \textbf{3:50:33} \\
\midrule
gpt-4o & MCTS & Novel Comp. & 19.00\% & \textbf{2.107} & \textbf{0.034} & \textbf{5.61} & 3:13:29 \\
gpt-4o & IS-MCTS & Novel Comp. & 19.00\% & 2.218 & 0.035 & 5.90 & 3:05:52 \\
gpt-4o & DR-MCTS & Novel Comp. & \textbf{22.70\%} & 2.489 & 0.038 & 6.60 & \textbf{2:48:51} \\
\midrule
gpt-4o & MCTS & Novel Comp + Objects & 10.34\% & \textbf{6.279} & \textbf{0.098} & \textbf{16.68} & 3:24:29 \\
gpt-4o & IS-MCTS & Novel Comp + Objects & 6.89\% & 6.763 & 0.107 & 17.98 & \textbf{2:39:38} \\
gpt-4o & DR-MCTS & Novel Comp + Objects & \textbf{20.68\%} & 7.501 & 0.118 & 19.94 & 3:05:33 \\
\bottomrule
\end{tabular}%
}
\begin{small}
\parbox{\textwidth}{
\textit{Note:} Bold values indicate the best performance for each metric: highest for success rate; lowest for input tokens, output tokens, API cost, and time.
}
\end{small}
\vskip -0.1in
\end{table*}

\begin{figure}[!htbp]
\vskip 0.2in
    \centering
    \includegraphics[width=0.5\textwidth]{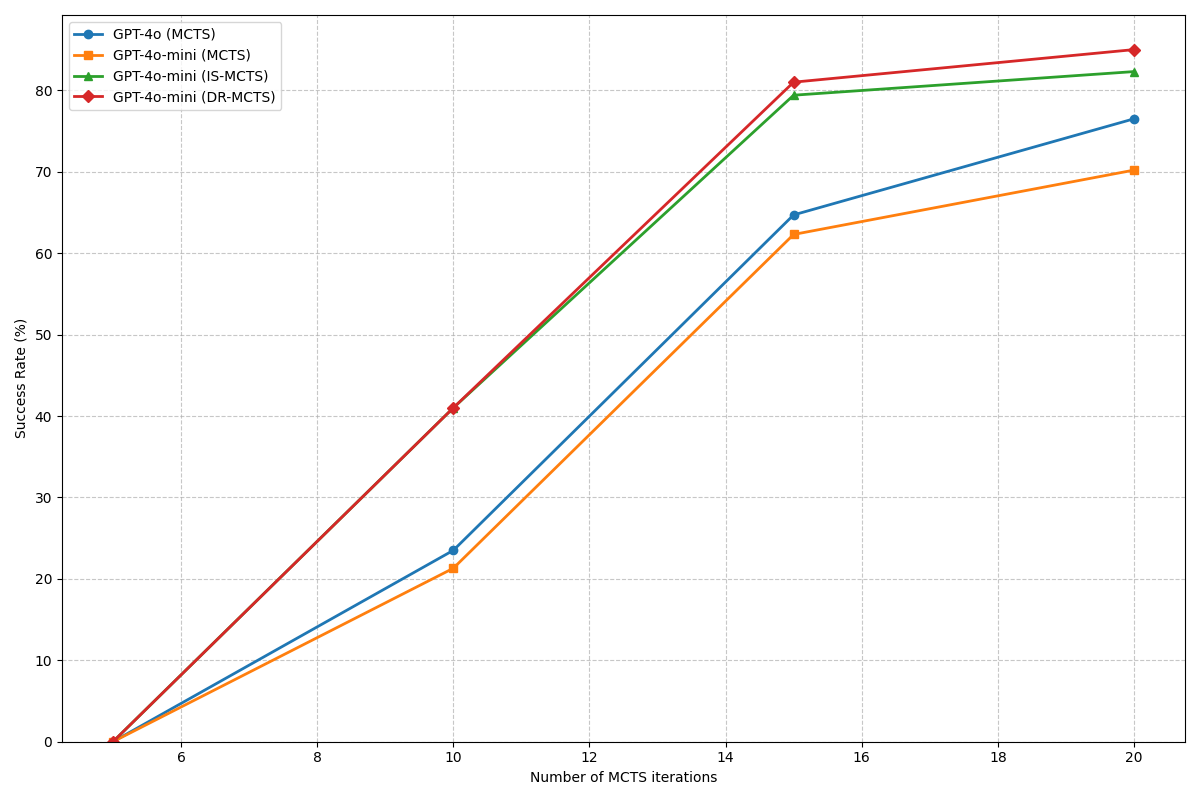}
    \vspace{0.5em}
    \includegraphics[width=0.5\textwidth]
    {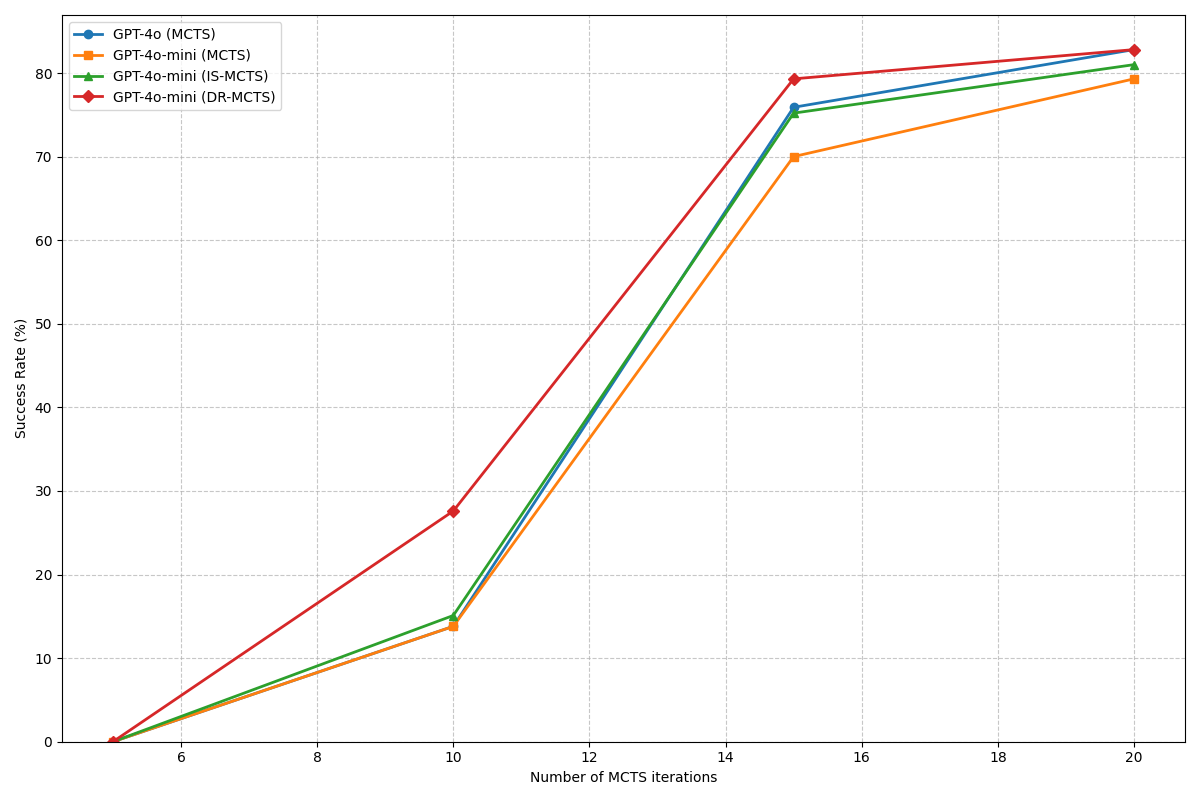}
    \vspace{0.5em}
    \includegraphics[width=0.5\textwidth]{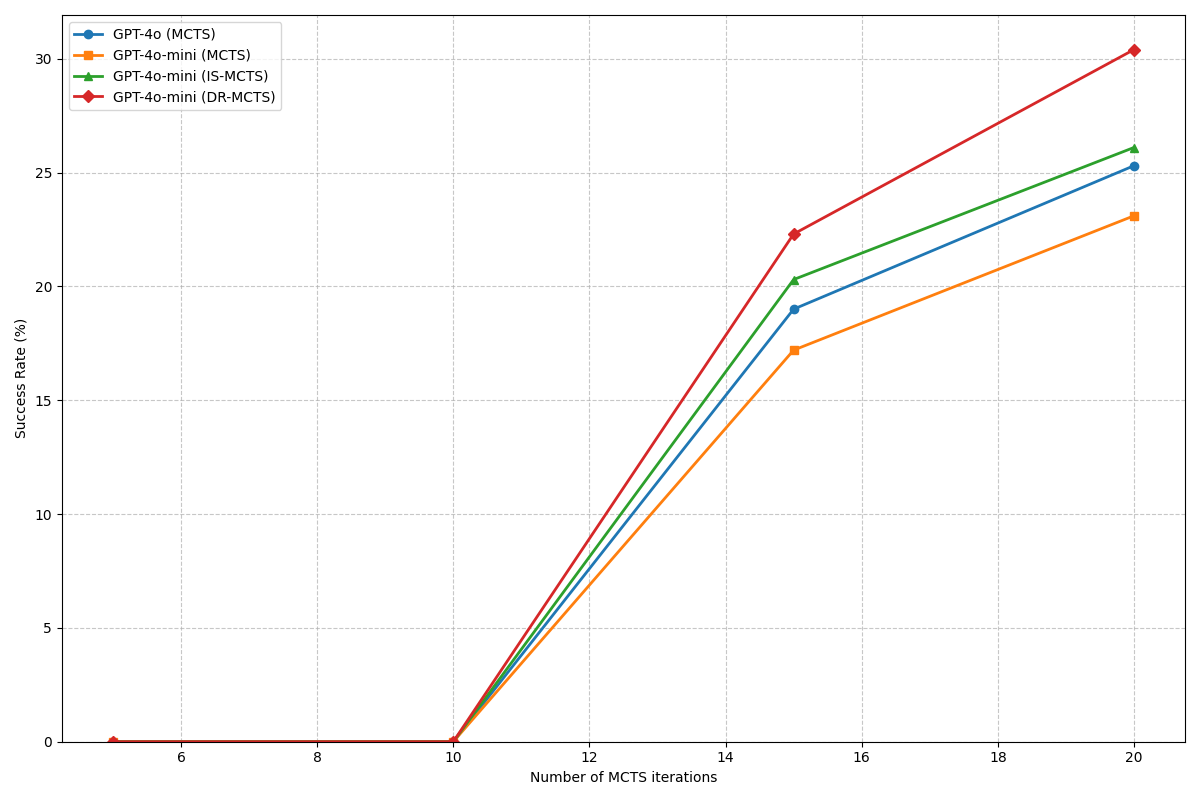}
    \caption{Scaling analysis for different task types in Virtual Home. Top: Novel Objects tasks. Middle: Novel Comp. tasks. Bottom: Novel Comp. + Objects tasks.}
    \label{fig:scaling_analysis}
        \vskip -0.2in
\end{figure}

Table \ref{tab:performance_comparison} and Figure \ref{fig:virtualhome_success} presents the performance of MCTS, IS-MCTS, and DR-MCTS across different task types and LLM models. DR-MCTS consistently outperforms the other methods in terms of success rate across all task types. For Novel Simple tasks, DR-MCTS achieves a 50.4\% success rate with GPT-4o-mini and 91.1\% with GPT-4o, compared to 32.4\% and 85.4\% for standard MCTS, respectively. This performance advantage is maintained for more complex tasks, with DR-MCTS achieving 44.1\% success on Novel Objects tasks and 22.7\% on Novel Comp. tasks, compared to 23.5\% and 19.0\% for MCTS, respectively.

Interestingly, DR-MCTS exhibits a task-dependent pattern in resource utilization. For simpler tasks, it generates fewer tokens and requires less computation time while achieving higher success rates. For instance, in Novel Simple tasks with GPT-4o, DR-MCTS uses 16.053M input tokens and takes 7:24:42, compared to 16.402M input tokens and 8:43:53 for MCTS, while achieving a higher success rate (91.1\% vs 85.4\%). As task complexity increases, DR-MCTS tends to generate more tokens to achieve its superior performance. In Novel Comp. + Objects tasks, DR-MCTS uses 7.501M input tokens compared to 6.279M for MCTS to achieve a higher success rate (20.68\% vs. 10.34 \%). This suggests that DR-MCTS adapts its resource allocation based on task difficulty, investing computational effort more effectively to maintain high success rates across varying task complexities.

Figure \ref{fig:scaling_analysis} illustrates the scaling behavior of different algorithms as the number of MCTS search steps increases. DR-MCTS demonstrates superior performance across all search depths, with its advantage being particularly pronounced in the early stages and for complex tasks. This highlights DR-MCTS's efficiency in quickly identifying promising actions in challenging environments. Notably, DR-MCTS with GPT-4o-mini as the world model often outperforms standard MCTS with the larger GPT-4o model, especially at lower number of MCTS iterations. These findings indicate that DR-MCTS can achieve competitive performance with fewer computational resources and smaller models, offering crucial advantages in time-sensitive or resource-constrained scenarios and potentially leading to substantial computational savings in practical applications.

\section{Discussion}
\label{discussion}
Our experimental results demonstrate the significant advantages of DR-MCTS over traditional MCTS across different domains, from simple Tic-Tac-Toe to complex, partially observable VirtualHome environments. In Tic-Tac-Toe, DR-MCTS consistently outperformed standard MCTS, with win rates ranging from 63\% to 88\%, compared to MCTS's 10\% to 37\%. This performance gap was maintained in the more challenging VirtualHome tasks, highlighting DR-MCTS's ability to handle increased complexity and partial observability.

The superior performance of DR-MCTS can be understood through our theoretical findings on variance reduction. Our analysis shows that the hybrid estimator achieves lower variance than standard MCTS when $\mathbb{E}[\sum_{t=0}^{H-1} \gamma^{2t} \rho_{1:t}^2 (Q(h_t, a_t) - \hat{Q}(h_t, a_t))^2] = o(\text{Var}(V_{\text{MCTS}}(h))/(1-\beta)^2)$, a condition likely met in many practical scenarios, especially in complex environments where MCTS variance is typically high. Importantly, our theory suggests that variance reduction is maximized when $\hat{Q}$ closely approximates $Q$, while still allowing for some discrepancy. This insight aligns with our experimental observations, where DR-MCTS showed improvements across different environments, with particularly notable gains in complex, partially observable environments like VirtualHome.

The observed task-dependent resource utilization pattern further supports our theoretical insights. In simpler tasks, where Q-value estimation errors might be small, DR-MCTS achieves superior performance with minimal additional resources. In more complex tasks, it allocates more resources to capitalize on the greater potential for variance reduction. This adaptive behavior demonstrates how DR-MCTS balances the trade-off between Q-value estimation accuracy and MCTS exploration, as implied by our theoretical findings.  Moreover, DR-MCTS's ability to achieve superior performance using smaller language models as world models aligns with our variance reduction theorem. The hybrid estimator's tolerance for imperfect Q-value estimates allows it to leverage less precise but more computationally efficient models effectively. 

While DR-MCTS has demonstrated effectiveness, it faces several limitations that warrant further investigation. The fine-tuning process for the $\beta$ parameter in the hybrid estimator can be computationally expensive, particularly in LLM-based environments where API calls are costly. Additionally, our current method for estimating $\hat{V}$ and $\hat{Q}$ relies on empirical averages rather than a strictly model-based approach. While effective in our experiments, this technique may not fully capture the intricacies of environment dynamics, especially in highly stochastic or partially observable settings.

To address these limitations and extend the capabilities of DR-MCTS, we propose several promising future research directions. First, implementing a dynamic $\beta$ function that adapts based on the relative magnitudes of Q-value estimation errors and MCTS variance could potentially reduce the need for extensive parameter tuning and mitigate computational costs. Second, incorporating more rigorous, model-based estimation methods for $\hat{V}$ and $\hat{Q}$ could lead to improved performance. For instance, using neural networks to learn state-value functions or employing Gaussian processes for value function approximation might better handle sparse rewards and provide stronger theoretical convergence guarantees, particularly in complex environments requiring long-term planning. These enhancements could further optimize the balance between Q-value estimation accuracy and MCTS exploration, potentially leading to even greater performance gains in challenging domains.

\section{Conclusion}
\label{conclusion}
DR-MCTS represents a significant advancement in Monte Carlo Tree Search methods, demonstrating consistent improvements over traditional approaches across various domains and complexity levels. By leveraging doubly robust estimation, MCTS-DR achieves higher success rates, better sample efficiency, and improved scalability, particularly in challenging scenarios and under computational constraints. The ability of DR-MCTS to maintain high performance with smaller large language models opens up new possibilities for deploying sophisticated planning algorithms in resource-limited settings. As AI systems are increasingly applied to complex real-world problems, techniques like DR-MCTS that can effectively balance performance, efficiency, and adaptability will become increasingly valuable.

\section*{Impact Statement}

This paper presents advancements in Monte Carlo Tree Search (MCTS) algorithms, aiming to improve decision-making processes in complex, partially observable environments. While our primary goal is to contribute to the field of Machine Learning, we acknowledge that this work may have broader societal implications. Our DR-MCTS algorithm could potentially enhance planning and decision-making in various real-world applications, such as robotics, automated planning, and resource management. The improved sample efficiency and performance with smaller models may lead to more energy-efficient AI systems, contributing to reduced computational costs and environmental impact. However, as with any advanced AI technology, there is a risk of unintended consequences if deployed in sensitive domains without proper oversight. The improved decision-making capabilities could be misused if applied in ethically questionable contexts. Additionally, while our algorithm aims to make better decisions, it inherits any biases present in the underlying models or training data. Users should be aware of potential biases and take appropriate measures to ensure fair and ethical application. We encourage further research into the ethical implications of advanced planning algorithms and advocate for responsible development and deployment of such technologies.

\newpage
\bibliography{reference}
\bibliographystyle{icml2025}

\newpage
\appendix
\onecolumn
\section{Theoretical Analysis of DR-MCTS}

\subsection{Unbiasedness of the Hybrid Estimator}
\label{appendiexA1}

\begin{theorem}[Unbiasedness of Hybrid Estimator]
The hybrid estimator is unbiased for estimating the value of the target policy $\pi_e$.
\end{theorem}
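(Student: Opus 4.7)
The plan is to exploit linearity of expectation. Since
\[
V_{\text{hybrid}}(h) = \beta V_{\text{MCTS}}(h) + (1-\beta) V_{DR}(h),
\]
we have $\mathbb{E}[V_{\text{hybrid}}(h)] = \beta\,\mathbb{E}[V_{\text{MCTS}}(h)] + (1-\beta)\,\mathbb{E}[V_{DR}(h)]$, so it is enough to verify that each constituent estimator is itself unbiased for $V^{\pi_e}(h)$; the convex combination of a value with itself is that value, and the theorem follows immediately. The work is therefore packaged into two independent unbiasedness claims, which can be proved separately and then glued together in one line.

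For the DR term, I would invoke (and briefly redo) the standard argument of \citet{jiang2016doubly}. Writing $V_{DR}(h) = \hat{V}(h) + \sum_{t=0}^{H-1} \gamma^t \rho_{1:t}\bigl(r_t + \gamma \hat{V}(h_{t+1}) - \hat{Q}(h_t, a_t)\bigr)$, I would take expectations step by step, using the change-of-measure identity $\mathbb{E}_{a_t \sim \pi_b(\cdot \mid h_t)}[\rho_t\, g(h_t,a_t)] = \mathbb{E}_{a_t \sim \pi_e(\cdot \mid h_t)}[g(h_t,a_t)]$ applied recursively in $t$. The $\hat{Q}(h_t,a_t)$ and $\gamma \hat{V}(h_{t+1})$ contributions telescope against $\hat{V}(h)$ because $\mathbb{E}_{a_t \sim \pi_e}[\hat{Q}(h_t,a_t)] = \hat{V}(h_t)$ when $\hat{V}$ is defined as the $\pi_e$-average of $\hat{Q}$ (as in Equation~\ref{eqn:q_hat} and the definition of $\hat{V}$ used by the algorithm). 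What survives is exactly the Bellman expansion of $V^{\pi_e}(h)$, giving $\mathbb{E}[V_{DR}(h)] = V^{\pi_e}(h)$ with no dependence on the accuracy of $\hat{V}$ or $\hat{Q}$.

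For the MCTS term, I would argue that each rollout reward $R_i(h)$ used in $V_{\text{MCTS}}(h) = \frac{1}{N(h)}\sum_i R_i(h)$ is, by construction of the simulation phase, an on-policy Monte Carlo sample of the return under $\pi_e$ from history $h$. Then $\mathbb{E}[R_i(h)] = V^{\pi_e}(h)$ by definition of the value function, and the sample mean preserves the expectation, so $\mathbb{E}[V_{\text{MCTS}}(h)] = V^{\pi_e}(h)$. Combining the two claims via the linearity equation above yields $\mathbb{E}[V_{\text{hybrid}}(h)] = V^{\pi_e}(h)$.

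The main obstacle is the MCTS term rather than the DR term. In practice MCTS simulations are not literally drawn from $\pi_e$: they are guided by the PUCT tree policy and, outside the tree, by a rollout/default policy that may coincide with $\pi_b$. I would therefore state explicitly as a standing assumption that the simulation phase produces trajectories whose distribution matches $\pi_e$ (or that the rollout reward has been re-weighted to target $\pi_e$), and note that otherwise one would need to apply an additional importance-sampling correction to $V_{\text{MCTS}}$ before invoking unbiasedness. Under this assumption, the two-line argument above closes the proof.
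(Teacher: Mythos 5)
Your proposal follows the same skeleton as the paper's proof: decompose $V_{\text{hybrid}}$ via linearity of expectation and establish unbiasedness of $V_{\text{MCTS}}$ and $V_{DR}$ separately. The differences lie in how each component is justified, and on both counts your version is the more defensible one. For the DR term, the paper takes the expectation under $\pi_b$ and closes the argument by asserting that $\mathbb{E}_{\pi_e}[\hat{Q}(h_t,a_t)] = Q(h_t,a_t)$ ``when $\hat{Q}$ is an unbiased estimator of $Q$'' --- a hypothesis that, if genuinely required, would undercut the point of double robustness; your recursive change-of-measure and telescoping argument, which needs only the consistency condition $\hat{V}(h_t)=\mathbb{E}_{a\sim\pi_e}[\hat{Q}(h_t,a)]$ rather than accuracy of $\hat{Q}$, is the standard and correct route from \citet{jiang2016doubly}. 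For the MCTS term, the paper simply asserts unbiasedness ``due to the properties of Monte Carlo estimation,'' whereas you correctly observe that the simulations are generated by the PUCT tree policy and a rollout policy that coincides with $\pi_b$, not by $\pi_e$, so unbiasedness for $V^{\pi_e}(h)$ requires an explicit on-policy or reweighting assumption. Surfacing that assumption is a genuine improvement: without it the theorem does not follow under either write-up, since $V_{\text{MCTS}}$ would be unbiased for the rollout policy's value rather than the target policy's. In short, same route as the paper, but your version makes explicit the hypotheses the paper leaves implicit.
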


\begin{proof}
We know that $V_{\text{MCTS}}(h)$ is unbiased due to the properties of Monte Carlo estimation. For $V_{DR}(h)$, we can express it as:

\begin{equation}
    V_{DR}(h) = \hat{V}(h) + \sum_{t=0}^{H-1} \gamma^t \rho_{1:t} (r_t + \gamma \hat{V}(h_{t+1}) - \hat{Q}(h_t, a_t))
\end{equation}

where $\rho_{1:t} = \prod_{k=1}^t \frac{\pi_e(a_k|h_k)}{\pi_b(a_k|h_k)}$ is the cumulative importance ratio.

Taking the expectation with respect to the behavior policy $\pi_b$:

\begin{align}
    \mathbb{E}_{\pi_b}[V_{DR}(h)] &= \mathbb{E}_{\pi_b}[\hat{V}(h)] + \mathbb{E}_{\pi_b}\left[\sum_{t=0}^{H-1} \gamma^t \rho_{1:t} (r_t + \gamma \hat{V}(h_{t+1}) - \hat{Q}(h_t, a_t))\right] \\
    &= \hat{V}(h) + \sum_{t=0}^{H-1} \gamma^t \mathbb{E}_{\pi_b}[\rho_{1:t} (r_t + \gamma \hat{V}(h_{t+1}) - \hat{Q}(h_t, a_t))] \\
    &= \hat{V}(h) + \sum_{t=0}^{H-1} \gamma^t (Q(h_t, a_t) - \mathbb{E}_{\pi_e}[\hat{Q}(h_t, a_t)]) \\
    &= V(h)
\end{align}

The last step follows from the fact that $\mathbb{E}_{\pi_e}[\hat{Q}(h_t, a_t)] = Q(h_t, a_t)$ when $\hat{Q}$ is an unbiased estimator of $Q$.

Therefore, both $V_{\text{MCTS}}(h)$ and $V_{DR}(h)$ are unbiased estimators of $V(h)$. Since $V_{\text{hybrid}}(h)$ is a linear combination of these unbiased estimators, it is also unbiased:

\begin{align}
    \mathbb{E}[V_{\text{hybrid}}(h)] &= \mathbb{E}[\beta V_{\text{MCTS}}(h) + (1-\beta) V_{DR}(h)] \\
    &= \beta \mathbb{E}[V_{\text{MCTS}}(h)] + (1-\beta) \mathbb{E}[V_{DR}(h)] \\
    &= \beta V(h) + (1-\beta) V(h) \\
    &= V(h)
\end{align}

Thus, the hybrid estimator remains unbiased in the DR-MCTS context.
\end{proof}

\subsection{Variance Reduction Condition for Hybrid Estimator}
\label{appendixA2}
\begin{theorem}[Variance Reduction Condition for Hybrid Estimator]
Let $V_{\text{hybrid}}(h)$ be the hybrid estimator as defined in Equation \ref{eqn:hybrid}, and $V_{\text{MCTS}}(h)$ be the standard MCTS estimator. The hybrid estimator has lower variance than the standard MCTS estimator when:

\begin{equation}
\mathbb{E}\left[\sum_{t=0}^{H-1} \gamma^{2t} \rho_{1:t}^2 (Q(h_t, a_t) - \hat{Q}(h_t, a_t))^2\right] = o\left(\frac{\text{Var}(V_{\text{MCTS}}(h))}{(1-\beta)^2}\right)
\end{equation}

where $\beta \in (0,1)$ is the weighting parameter in the hybrid estimator, $\gamma$ is the discount factor, $\rho_{1:t}$ is the cumulative importance ratio, and $Q(h_t, a_t)$ and $\hat{Q}(h_t, a_t)$ are the true and estimated action-value functions, respectively.
\end{theorem}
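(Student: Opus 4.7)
The plan is to expand the variance of the hybrid estimator as a quadratic form in $\text{Var}(V_{\text{MCTS}})$, $\text{Var}(V_{DR})$, and their covariance, bound the DR variance using the residual structure of Equation \ref{eqn:DR}, and apply Cauchy--Schwarz to the cross term. The little-$o$ hypothesis in the theorem then forces the DR contribution to be negligible compared to the MCTS variance, which will make the hybrid variance strictly smaller than $\text{Var}(V_{\text{MCTS}})$ for any fixed $\beta\in(0,1)$.

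First, I would write
\[
\text{Var}(V_{\text{hybrid}}) = \beta^2 \text{Var}(V_{\text{MCTS}}) + (1-\beta)^2 \text{Var}(V_{DR}) + 2\beta(1-\beta)\text{Cov}(V_{\text{MCTS}}, V_{DR}),
\]
and recast the desired inequality $\text{Var}(V_{\text{hybrid}}) < \text{Var}(V_{\text{MCTS}})$ as
\[
(1-\beta)^2 \text{Var}(V_{DR}) + 2\beta(1-\beta)\text{Cov}(V_{\text{MCTS}}, V_{DR}) < (1-\beta^2)\text{Var}(V_{\text{MCTS}}).
\]

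Second, I would bound $\text{Var}(V_{DR})$. Since Theorem \ref{theorem1} shows $V_{DR}$ is unbiased, its variance equals $\mathbb{E}[(V_{DR}(h)-V(h))^2]$. Expanding the residual via Equation \ref{eqn:DR} and conditioning level-by-level on the history filtration, the cross-fold construction of $\hat Q$ from Equation \ref{eqn:q_hat} renders the reward noise at each step orthogonal to contributions from earlier steps, so that the cross terms in the square vanish in expectation. A Bellman-style recursion along the tree then produces a constant $C>0$ with
\[
\text{Var}(V_{DR}(h)) \leq C\cdot \mathbb{E}\!\left[\sum_{t=0}^{H-1} \gamma^{2t} \rho_{1:t}^2 (Q(h_t, a_t) - \hat Q(h_t, a_t))^2\right].
\]
Third, I would control the covariance by Cauchy--Schwarz, $|\text{Cov}(V_{\text{MCTS}}, V_{DR})| \leq \sqrt{\text{Var}(V_{\text{MCTS}}) \text{Var}(V_{DR})}$. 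Combining this with the bound from step two, the theorem's hypothesis makes $(1-\beta)^2 \text{Var}(V_{DR})$ and $(1-\beta)\sqrt{\text{Var}(V_{\text{MCTS}})\text{Var}(V_{DR})}$ both $o(\text{Var}(V_{\text{MCTS}}))$, so the left-hand side of the recast inequality is strictly dominated by the $(1-\beta^2)\text{Var}(V_{\text{MCTS}})$ term on the right in the asymptotic regime implied by the hypothesis.

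The hardest step is the variance bound in step two. It requires disentangling three sources of randomness in $V_{DR}$, namely reward noise, importance-weight variability, and $Q-\hat Q$, and showing that cross terms between different time indices vanish in expectation. The cross-validated construction of $\hat Q$ in Equation \ref{eqn:q_hat} is essential here: without approximate independence between $\hat Q$ and the evaluation trajectory at each fold, additional bias-variance cross terms would appear that are not captured by the $(Q-\hat Q)^2$ expression in the theorem's hypothesis. Rigorously handling the non-stationary visitation distribution along the MCTS tree may also require an explicit filtration argument beyond the standard \citet{jiang2016doubly} analysis.
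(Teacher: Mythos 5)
Your overall architecture is sound and, in one respect, cleaner than the paper's: you keep $\beta^2\,\mathrm{Var}(V_{\mathrm{MCTS}})$ on the left and therefore retain the positive slack $(1-\beta^2)\,\mathrm{Var}(V_{\mathrm{MCTS}})$ on the right, so that terms which are $o(\mathrm{Var}(V_{\mathrm{MCTS}}))$ can genuinely be absorbed. The paper instead recenters around $\Delta = V_{DR}-V_{\mathrm{MCTS}}$, writes $\mathrm{Var}(V_{\mathrm{hybrid}})\le \mathrm{Var}(V_{\mathrm{MCTS}})+(1-\beta)^2\epsilon+2\beta(1-\beta)\sqrt{\mathrm{Var}(V_{\mathrm{MCTS}})\,\epsilon}$, and then demands that the sum of the two nonnegative correction terms be negative; your recast inequality avoids that dead end. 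So on the purely algebraic side your route is a genuine and preferable alternative.

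The gap is in your step two. You claim a constant $C$ with $\mathrm{Var}(V_{DR}(h)) \le C\,\mathbb{E}\bigl[\sum_{t=0}^{H-1}\gamma^{2t}\rho_{1:t}^2(Q(h_t,a_t)-\hat Q(h_t,a_t))^2\bigr]$. No such bound can hold in general: take $\hat Q = Q$ and $\hat V = V$, so the right-hand side is zero, yet $V_{DR}$ still has strictly positive variance. The per-step residual $r_t+\gamma\hat V(h_{t+1})-\hat Q(h_t,a_t)$ has conditional mean $Q(h_t,a_t)-\hat Q(h_t,a_t)$ but conditional second moment $(Q-\hat Q)^2+\mathrm{Var}\bigl(r_t+\gamma\hat V(h_{t+1})\mid h_t,a_t\bigr)$; the filtration/cross-fold argument you invoke kills cross terms between time indices but cannot remove these diagonal conditional-variance contributions, which are driven by reward and transition stochasticity and amplified by $\rho_{1:t}^2$. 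Any correct bound on $\mathrm{Var}(V_{DR})$ must therefore carry an additive term beyond $\epsilon$, and with it the stated hypothesis alone does not suffice for your step four. For what it is worth, the paper's proof commits essentially the same substitution, silently replacing $(r_t+\gamma\hat V(h_{t+1})-\hat Q(h_t,a_t))^2$ by $(Q(h_t,a_t)-\hat Q(h_t,a_t))^2$ when defining $\epsilon$, so your gap mirrors one already present in the paper; but as written your step two cannot be completed as stated.
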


\begin{proof}
We begin by expressing the variance of the hybrid estimator:

\begin{align}
\text{Var}(V_{\text{hybrid}}(h)) &= \text{Var}(\beta V_{\text{MCTS}}(h) + (1-\beta) V_{DR}(h)) \\
&= \beta^2 \text{Var}(V_{\text{MCTS}}(h)) + (1-\beta)^2 \text{Var}(V_{DR}(h)) + 2\beta(1-\beta)\text{Cov}(V_{\text{MCTS}}(h), V_{DR}(h))
\end{align}

Let $\Delta(h) = V_{DR}(h) - V_{\text{MCTS}}(h)$. Then:

\begin{align}
\text{Var}(V_{\text{hybrid}}(h)) &= \text{Var}(V_{\text{MCTS}}(h)) + (1-\beta)^2 \text{Var}(\Delta(h)) + 2\beta(1-\beta)\text{Cov}(V_{\text{MCTS}}(h), \Delta(h))
\end{align}

We make the following assumptions:
\begin{enumerate}
    \item The DR estimator is unbiased, i.e., $\mathbb{E}[V_{DR}(h)] = V(h)$, where $V(h)$ is the true value function.
    \item The MCTS estimator is also unbiased, i.e., $\mathbb{E}[V_{\text{MCTS}}(h)] = V(h)$.
\end{enumerate}

Under these assumptions, we have $\mathbb{E}[\Delta(h)] = 0$. Now, let's consider the covariance term:

\begin{align}
\text{Cov}(V_{\text{MCTS}}(h), \Delta(h)) &= \mathbb{E}[V_{\text{MCTS}}(h)\Delta(h)] - \mathbb{E}[V_{\text{MCTS}}(h)]\mathbb{E}[\Delta(h)] \\
&= \mathbb{E}[V_{\text{MCTS}}(h)\Delta(h)] \quad \text{(since $\mathbb{E}[\Delta(h)] = 0$)}
\end{align}

Using the Cauchy-Schwarz inequality, we can bound this covariance:

\begin{align}
|\text{Cov}(V_{\text{MCTS}}(h), \Delta(h))| &\leq \sqrt{\text{Var}(V_{\text{MCTS}}(h)) \cdot \text{Var}(\Delta(h))}
\end{align}

Now, let's consider the variance of $\Delta(h)$. We can express $\Delta(h)$ as:

\begin{equation}
\Delta(h) = \sum_{t=0}^{H-1} \gamma^t \rho_{1:t} (r_t + \gamma \hat{V}(h_{t+1}) - \hat{Q}(h_t, a_t))
\end{equation}

Using Jensen's inequality, we can bound the variance of $\Delta(h)$:

\begin{align}
\text{Var}(\Delta(h)) &= \text{Var}\left(\sum_{t=0}^{H-1} \gamma^t \rho_{1:t} (r_t + \gamma \hat{V}(h_{t+1}) - \hat{Q}(h_t, a_t))\right) \\
&\leq \sum_{t=0}^{H-1} \text{Var}\left(\gamma^t \rho_{1:t} (r_t + \gamma \hat{V}(h_{t+1}) - \hat{Q}(h_t, a_t))\right) \\
&= \sum_{t=0}^{H-1} \gamma^{2t} \mathbb{E}[\rho_{1:t}^2] \cdot \text{Var}(r_t + \gamma \hat{V}(h_{t+1}) - \hat{Q}(h_t, a_t))
\end{align}

We can further bound this by noting that $\text{Var}(r_t + \gamma \hat{V}(h_{t+1}) - \hat{Q}(h_t, a_t)) \leq \mathbb{E}[(r_t + \gamma \hat{V}(h_{t+1}) - \hat{Q}(h_t, a_t))^2]$:

\begin{align}
\text{Var}(\Delta(h)) &\leq \sum_{t=0}^{H-1} \gamma^{2t} \mathbb{E}[\rho_{1:t}^2] \cdot \mathbb{E}[(r_t + \gamma \hat{V}(h_{t+1}) - \hat{Q}(h_t, a_t))^2] \\
&= \mathbb{E}\left[\sum_{t=0}^{H-1} \gamma^{2t} \rho_{1:t}^2 (r_t + \gamma \hat{V}(h_{t+1}) - \hat{Q}(h_t, a_t))^2\right]
\end{align}

Let $\epsilon := \mathbb{E}\left[\sum_{t=0}^{H-1} \gamma^{2t} \rho_{1:t}^2 (Q(h_t, a_t) - \hat{Q}(h_t, a_t))^2\right]$. Then:

\begin{align}
\text{Var}(V_{\text{hybrid}}(h)) &\leq \text{Var}(V_{\text{MCTS}}(h)) + (1-\beta)^2 \epsilon + 2\beta(1-\beta)\sqrt{\text{Var}(V_{\text{MCTS}}(h)) \cdot \epsilon}
\end{align}

For the hybrid estimator to have lower variance than the standard MCTS estimator, we need:

\begin{equation}
\text{Var}(V_{\text{hybrid}}(h)) < \text{Var}(V_{\text{MCTS}}(h))
\end{equation}

This condition is satisfied when:

\begin{equation}
(1-\beta)^2 \epsilon + 2\beta(1-\beta)\sqrt{\text{Var}(V_{\text{MCTS}}(h)) \cdot \epsilon} < 0
\end{equation}

Solving this inequality:

\begin{align}
(1-\beta)^2 \epsilon &< 2\beta(1-\beta)\sqrt{\text{Var}(V_{\text{MCTS}}(h)) \cdot \epsilon} \\
(1-\beta)^2 \epsilon &< 4\beta^2 \text{Var}(V_{\text{MCTS}}(h)) \\
\epsilon &< \frac{4\beta^2}{(1-\beta)^2}\text{Var}(V_{\text{MCTS}}(h))
\end{align}

Since $\beta \in (0,1)$ is a constant, we can express this condition using small o notation:

\begin{equation}
\epsilon = o\left(\frac{\text{Var}(V_{\text{MCTS}}(h))}{(1-\beta)^2}\right)
\end{equation}

This completes the proof.
\end{proof}
\section{Behavior Policies}
\label{appendix:behavior_policy}

The behavior policies used in our experiments vary depending on the environment. Below, we detail the specific behavior policies for Tic-Tac-Toe and VirtualHome.

\subsection{Tic-Tac-Toe Behavior Policy}

For Tic-Tac-Toe, we implement a simple heuristic rollout policy as the behavior policy:

\begin{equation}
    \hat{\pi}_b(a|h) = 
    \begin{cases}
        1 & \text{if } a = \text{rollout\_policy}(h) \\
        0 & \text{otherwise}
    \end{cases}
\end{equation}

where rollout\_policy$(h)$ is a deterministic function that selects moves in the following order of preference: center, corners, then other squares, choosing the first available move from this ordered list. Specifically:

\begin{algorithm}[!htbp]
   \caption{Tic-Tac-Toe Behavior Policy}
   \label{alg:tictactoe_behavior}
\begin{algorithmic}
   \STATE {\bfseries Input:} game state $h$
   \STATE preferred\_moves $\gets$ [4, 0, 2, 6, 8, 1, 3, 5, 7]
   \FOR{move {\bfseries in} preferred\_moves}
      \IF{move is available in $h$}
         \STATE {\bfseries return} move
      \ENDIF
   \ENDFOR
   \STATE available\_moves $\gets$ all empty cells in $h$
   \STATE {\bfseries return} random choice from available\_moves
\end{algorithmic}
\end{algorithm}

This policy ensures a consistent exploration strategy while maintaining some basic game-playing heuristics.
\subsection{VirtualHome Behavior Policy}

For the VirtualHome environment, we adapt the approach of Zhao et al. \cite{Zhao2023} to leverage Large Language Models (LLMs) as a heuristic policy. Specifically, we use GPT-4o-mini or GPT-4o to generate the behavior policy, guiding action selection in the simulation procedure.

The LLM takes as input:
\begin{itemize}
    \item K-shot examples from the dataset
    \item Goal description
    \item Current observation
    \item History of actions
\end{itemize}

All inputs are translated into English sentences. The LLM then outputs a suggested action plan. To approximate the policy distribution, we sample the LLM $M$ times, querying it with the prompt and trajectory history $h$:

\begin{equation}
    \alpha_i \sim \text{LLM}(h, \text{prompt})
\end{equation}

where $\alpha_i$ is the first action of the LLM's answer.

The prompt examples are selected based on their similarity to the current language instruction $\ell$. We use sentence embeddings to calculate the cosine similarity between the current instruction and instructions $\ell_i$ in the dataset $D$:

\begin{equation}
    \text{similarity} = \text{CosineSim}(\ell_i, \ell)
\end{equation}

We select the top $K$ similar instructions and use their corresponding expert trajectories as the K-shot prompt.

To ensure executability, we represent both the LLM's suggested actions and the admissible actions as embeddings and evaluate their cosine similarity. The empirical policy distribution is then formulated as:

\begin{equation}
    \hat{\pi}_b(a|h) = \lambda \frac{1}{|A|} + (1-\lambda)\text{Softmax}\left\{\sum_{i=1}^M \text{CosineSim}(\alpha_i, a) - \eta\right\}
\end{equation}

where $\eta$ is the average value of $\sum_i \text{CosineSim}(\alpha_i, a)$, $|A|$ is the size of the admissible action space, and $\lambda$ is a hyperparameter that adds randomness to the policy. This results in a mixture of the approximated policy from the LLM and a uniform distribution.

In our implementation, we use either GPT-4o-mini or GPT-4o as the LLM, allowing us to compare the performance of different model sizes in generating effective behavior policies for complex, partially observable environments like VirtualHome.

\end{document}